\theoremstyle{plain}
\newtheorem{theorem}{Theorem}
\newtheorem{lemma}[theorem]{Lemma}
\theoremstyle{definition}
\newtheorem{assumption}[theorem]{Assumption}
\newtheorem{remark}[theorem]{Remark}
\theoremstyle{remark}
\newcommand{\eq}[1]{\begin{align*}#1\end{align*}}
\newcommand{\eqn}[1]{\begin{align}#1\end{align}}
\newcommand{\kappaz}{\kappa_{\circ}}
\newcommand{\Prob}[1]{\mathbf{P}\left(#1\right)}
\newcommand{\set}[1]{\left\{ #1\right\}}
\newcommand{\ind}[1]{\mathds{1}\left\{#1\right\}}
\newcommand{\argmax}{\operatornamewithlimits{arg\,max}}
\newcommand{\KL}{\operatorname{KL}}
\renewcommand{\Re}{\mathcal R}
\newcommand{\R}{\mathbf{R}}
\newcommand{\E}{\mathbf{E}}
\newcommand{\Var}{\mathbf{V}}
\newcommand{\Kurt}{\operatorname{Kurt}}
\newcommand{\cH}{\mathcal H}
\newcommand{\cL}{\mathcal L}
\let\epsilon\varepsilon
\title{A Scale Free Algorithm for Stochastic Bandits with Bounded Kurtosis}
\author{Tor Lattimore}
\begin{document}
\maketitle

\begin{abstract}
Existing strategies for finite-armed stochastic bandits mostly depend on a parameter of scale that must be known in advance.
Sometimes this is in the form of a bound on the payoffs, or the knowledge of a variance or subgaussian parameter. The notable exceptions 
are the analysis of Gaussian bandits with unknown mean and variance by \cite{CK15} and of uniform distributions with unknown support \citep{CK15b}. 
The results derived in these specialised cases are generalised here to the non-parametric setup, where the learner knows only a bound on the kurtosis of the noise, which 
is a scale free measure of the extremity of outliers.
\end{abstract}

\section{Introduction}

The purpose of this note is to show that logarithmic regret is possible for finite-armed bandits with no assumptions on the noise of the payoffs except
for a known finite bound on the kurtosis, which corresponds to knowing the likelihood/magnitude of outliers \citep{DeC97}. Importantly, the kurtosis is independent of the location
of the mean and \textit{scale} of the central tendency (the variance). This generalises the ideas of \cite{CK15} beyond the Gaussian case with unknown 
mean and variance to the non-parametric setting.

The setup is as follows. Let $k \geq 2$ be the number of bandits (or arms). In each round $1 \leq t \leq n$ the player 
should choose an action $A_t \in \set{1,\ldots,k}$ and subsequently receives a reward
$X_t \sim \nu_{A_t}$, where $\nu_1,\ldots,\nu_k$ are a set of distributions that are not known in advance.
Let $\mu_i$ be the mean payoff of the $i$th arm and $\mu^* = \max_i \mu_i$
and $\Delta_i = \mu^* - \mu_i$. 
The regret measures the expected deficit of the player relative to the optimal choice of distribution:
\eqn{
\Re_n = \E\left[\sum_{t=1}^n \Delta_{A_t}\right]\,.
\label{eq:regret}
}
The table below summarises many of the known results on the optimal achievable asymptotic regret under different assumptions on $\{\nu_i\}$.
A reference for each of the upper bounds is given in Table~\ref{tab:typical}, while the lower bounds are mostly due to \cite{LR85} and \cite{BK96}.
An omission from the table is when the distributions are known to lie in a single-parameter exponential family (which does not fit well with the columns). 
Details are by \cite{KKM13,Kau16}.
\begin{table}[H]
\renewcommand{\arraystretch}{1.8}
\centering
\small
\begin{tabular}{|cp{4cm}lll|}
\hline
& \textbf{Assumption} & \textbf{Known} & \textbf{Unknown} & $\lim_{n\to\infty} \Re_n / \log(n)$ \\ \hline
1 & Bernoulli \newline \scriptsize \cite{LR85} & $\operatorname{Supp}(\nu_i) \subseteq \set{0,1}$ & $\mu_i \in [0,1]$ & $\displaystyle \sum_{i:\Delta_i > 0} \frac{1}{d(\mu_i, \mu^*)}$ \\
2 & Bounded \newline \scriptsize \cite{HT10} & $\operatorname{Supp}(\nu_i) \subseteq [0,1]$ & distribution & \text{it's complicated}  \\
3 & Semi-bounded \newline \scriptsize \cite{HT15} & $\operatorname{Supp}(\nu_i) \subseteq (-\infty,1]$ & distribution & \text{it's complicated} \\
4 & Gaussian (known var.) \newline \scriptsize \cite{KR95} & $\nu_i = \mathcal N(\mu_i,\sigma_i^2)$ & $\mu_i \in \R$ & $\displaystyle \sum_{i:\Delta_i > 0} \frac{2\sigma^2_i}{\Delta_i}$ \\
5 & Uniform \newline \scriptsize \cite{CK15b} & $\nu_i = \mathcal U(a_i, b_i)$ & $a_i,\,  b_i$ & $\displaystyle \sum_{i:\Delta_i > 0} \frac{\Delta_i}{\log\left(1 + \frac{2\Delta_i}{b_i-a_i}\right)}$ \\
6 & Subgaussian \newline \scriptsize \cite{BC12} & $\log M_{\nu_i}(\lambda) \leq \frac{\lambda^2 \sigma_i^2}{2}\,\, \forall \lambda$ & distribution & \cellcolor{gray!20!white} $\displaystyle \sum_{i:\Delta_i > 0} \frac{2\sigma^2_i}{\Delta_i}$ \\
7 & Known variance \newline \scriptsize \cite{BCL13} & $\Var[\nu_i] \leq \sigma^2_i$ & distribution & \cellcolor{gray!20!white}$\displaystyle O\left(\sum_{i:\Delta_i > 0} \frac{\sigma^2_i}{\Delta_i}\right)$ \\ 
8 & Gaussian \newline \scriptsize \cite{CK15} & $\nu_i = \mathcal N(\mu_i, \sigma^2)$ & $\mu_i \in \R$, $\sigma^2_i > 0$ & $\displaystyle \sum_{i:\Delta_i > 0} \frac{2\Delta_i}{\log\left(1 + \Delta_i^2/\sigma^2_i\right)}$ \\ \hline
\multicolumn{5}{|p{15.3cm}|}{$d(p,q) = p\log(p/q) + (1-p)\log((1-p)/(1-q))$ and 
$M_\nu(\lambda) = \E_{X \sim \nu} \exp((X - \mu) \lambda)$ with $\mu$ the mean of $\nu$ is the centered moment generating function.
All asymptotic results are optimal except for the grey cells.} \\ \hline
\end{tabular}
\caption{Typical distributional assumptions and asymptotic regret}\label{tab:typical}
\end{table}

With the exception of rows 5 and 8 in Table~\ref{tab:typical}, all entries depend on some kind of scale parameter.
Missing is an entry for a non-parametric assumption that is scale free. This paper fills that gap with the following assumption and regret guarantee.

\begin{assumption}\label{assumption1}
There exists a known $\kappa \in \R$ such that for all $1 \leq i \leq k$, the kurtosis of $X \sim \nu_i$ is at most
\eq{
\Kurt[X] = \frac{\E[(X - \E[X])^4]}{\Var[X]^2} \leq \kappa\,.
}
\end{assumption}

\begin{theorem}\label{thm:upper}
If Assumption 1 holds, then the algorithm described in \S\ref{sec:alg} satisfies 
\eq{
\limsup_{n\to\infty} \frac{\Re_n}{\log(n)} \leq C\sum_{i:\Delta_i > 0} \Delta_i \left(\kappa - 1 + \frac{\sigma^2_i}{\Delta_i^2}\right)\,,
}
where $\sigma^2_i$ is the variance of $\nu_i$ and $C > 0$ is a universal constant.
\end{theorem}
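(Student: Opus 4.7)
The plan is to follow the standard template of UCB-style analyses, but with the crucial adaptation that the confidence intervals be built from the \emph{empirical} variance rather than a known scale parameter. I would start from the usual regret decomposition $\Re_n = \sum_{i:\Delta_i > 0} \Delta_i \E[T_i(n)]$, where $T_i(n) = \sum_{t=1}^n \ind{A_t = i}$, so that proving the theorem reduces to showing that for each suboptimal arm $i$,
\eq{
\limsup_{n \to \infty} \frac{\E[T_i(n)]}{\log n} \leq C\left(\kappa - 1 + \frac{\sigma^2_i}{\Delta_i^2}\right).
}

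The heart of the argument requires two concentration tools. First, for the empirical mean $\hat\mu_{i,s}$ based on $s$ samples from $\nu_i$, I would need a self-normalised deviation bound roughly of the form $|\hat\mu_{i,s} - \mu_i| \lesssim \sqrt{\hat\sigma^2_{i,s} \log(1/\delta)/s}$ up to a lower-order correction. Since we have no subgaussianity or boundedness, this likely comes from an empirical Bernstein-type inequality obtained via truncation at a data-dependent scale calibrated by $\hat\sigma^2_{i,s}$, with the $\kappa$ bound controlling the bias introduced by truncation. Second, I need to show that the empirical variance concentrates: $|\hat\sigma^2_{i,s} - \sigma^2_i| \lesssim \sigma^2_i \sqrt{(\kappa - 1) \log(1/\delta)/s}$. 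This is essentially Chebyshev applied to $\hat\sigma^2_{i,s}$, using that $\Var[(X - \mu)^2] = \sigma^4(\kappa - 1)$, together with peeling over $s$ so the bound holds uniformly in time.

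Granting these inequalities, I would define a good event $G$ on which the confidence intervals contain the true means and variances for all arms and all times in $\{1,\ldots,n\}$. On $G$, the usual UCB argument applies: the optimal arm's index is always above $\mu^*$, so a suboptimal arm $i$ can only be chosen if its index exceeds $\mu^*$, which forces $\Delta_i \lesssim \sqrt{\hat\sigma^2_{i,T_i(t)} \log(n)/T_i(t)}$ plus a correction; inverting gives $T_i(n) \lesssim \sigma^2_i \log(n)/\Delta_i^2$, where the factor $(\kappa - 1)\Delta_i$ arises separately from the slack needed so that the inverted inequality can be expressed in terms of the true $\sigma^2_i$ rather than $\hat\sigma^2_{i,T_i(t)}$. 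The complementary event $G^c$ should contribute $O(\kappa)$ pulls by a direct moment-based calculation and union bound, and hence is absorbed into the $\limsup$.

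The main obstacle will be designing the confidence bonus so that three things hold simultaneously: it is self-normalised in the empirical variance so the algorithm is truly scale free; it is wide enough for the union bound over times and arms to succeed under only a fourth-moment assumption; and it is tight enough that the inversion produces the correct leading constants $\kappa - 1$ and $\sigma^2_i/\Delta_i^2$ rather than their product. A secondary subtlety is the initial phase, where $T_i(t)$ is too small for the variance estimator to be meaningful; this typically requires either a forced exploration schedule or a fall-back inflation of the bonus for small $s$, both of which must be tuned so that their contribution is $o(\log n)$ and therefore invisible to the $\limsup$.
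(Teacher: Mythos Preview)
Your high-level template (the decomposition $\Re_n = \sum_i \Delta_i \E[T_i(n)]$, the split into ``optimal arm's index too low'' versus ``suboptimal arm's index too high'', and the inversion of a confidence width to get $T_i(n) \lesssim (\kappa-1+\sigma_i^2/\Delta_i^2)\log n$) is exactly what the paper does. The gap is in the concentration tools you intend to use, and it is not cosmetic.

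You propose an empirical-Bernstein-via-truncation bound for the mean and ``essentially Chebyshev'' for the variance. Under a bare kurtosis assumption neither of these yields the exponential tails you need. Chebyshev on $\hat\sigma^2_{i,s}$ gives at best $\Prob{|\hat\sigma^2_{i,s}-\sigma_i^2| > t} \leq \sigma_i^4(\kappa-1)/(s t^2)$, which is polynomial in the deviation and cannot be driven to $\delta$ uniformly over $s \leq n$ without paying a polynomial-in-$n$ union bound; the ``good event'' $G$ then fails with probability that is not $o(1/\log n)$, and the complementary contribution swamps the bound. The truncated Bernstein idea has the same defect one level down: to choose the truncation scale you need $\hat\sigma$, but $\hat\sigma$ only concentrates if the truncation is already in place, so the circularity you flag as an obstacle is in fact fatal for the classical estimators. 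The paper makes this point explicitly in the introduction: finite-time exponential concentration of the sample mean and sample second moment is \emph{not available} under a fourth-moment hypothesis alone unless the naive empirical estimators are replaced by robust ones.

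The fix the paper uses is the median-of-means estimator, applied both to $\{Y_s\}$ (for the mean) and to $\{(Y_s-\theta)^2\}$ (for the variance), together with an implicit/optimistic definition of the index that solves the circularity. Median-of-means delivers a bound of the form $\Prob{|\mm_\delta - \mu| \geq C\sqrt{\sigma^2 \log(1/\delta)/n}} \leq \delta$ assuming only finite variance, which is precisely the exponential-tail ingredient your argument is missing. Once that estimator is in place, your outline goes through essentially verbatim and matches the paper's Lemmas~\ref{lem:conc1} and~\ref{lem:conc2}.
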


What are the implications of this result?
The first point is that the algorithm in \S\ref{sec:alg} is scale and translation invariant in the sense that its behaviour does not change if the
payoffs are multiplied by a positive constant or shifted.
The regret also depends appropriately on the scale so that multiplying the rewards by a positive constant factor also multiplies the regret by this factor.
As far as I know, this is the first scale free bandit algorithm with logarithmic regret on a non-parametric class.
The assumption on the boundedness of the kurtosis is much less restrictive than assuming an exact Gaussian model (which has kurtosis 3) or uniform (kurtosis 9/5). 
See Table \ref{table:kurtosis} for other examples.

\begin{wraptable}{r}{7.8cm}
\renewcommand{\arraystretch}{1.6}
\centering
\begin{tabular}{|lll|}
\hline 
\textbf{Distribution} & \textbf{Parameters} & \textbf{Kurtosis} \\
Gaussian & $\mu \in \R, \sigma^2 > 0$ & 3 \\
Bernoulli & $\mu \in [0,1]$ & $\frac{1 - 3\mu(1-\mu)}{\mu(1-\mu)}$ \\
Exponential & $\lambda > 0$ & $9$ \\ 
Laplace & $\mu \in \R, b > 0$ & $9$ \\
Uniform & $a < b \in \R$ & $9/5$ \\ \hline
\end{tabular}
\caption{Kurtosis}\label{table:kurtosis}
\end{wraptable}
As mentioned, the kurtosis is a measure of the likelihood/existence of outliers of a distribution, and it makes intuitive sense that a bandit strategy might depend
on some kind of assumption on this quantity. How else to know whether or not to cease exploring an unpromising action?
The assumption can also be justified from a mathematical perspective. If the variance of an arm is not assumed known, then calculating confidence intervals
requires an estimate of the variance from the data.
Let $X, X_1,X_2,\ldots,X_n$ be a sequence of i.i.d.\ centered random variable with finite-variance $\sigma^2$. 
A reasonable estimate of $\sigma^2$ is
\eqn{
\hat \sigma^2 = \frac{1}{n} \sum_{t=1}^n X_t^2\,.
\label{eq:simple-est}
}
Clearly this estimator is unbiased and has variance
\eq{
\Var[\hat \sigma^2] = \frac{\E[X^4] - \E[X^2]^2}{n} 
= \frac{\sigma^4 \left(\kappa - 1\right)}{n}\,.
}
Therefore, if we are to expect good estimation of $\sigma^2$, then the kurtosis should be finite.
Note that if $\sigma^2$ is estimated by (\ref{eq:simple-est}), then the central limit theorem combined with 
finite kurtosis is enough for an estimation error of $O(\sigma^2 ((\kappa-1) / n)^{1/2})$ \textit{asymptotically}.
For bandits, however, finite-time bounds are required, which are not available using (\ref{eq:simple-est}) without additional moment assumptions 
(for example, on the moment generating function).
Finite kurtosis alone \textit{is enough} if the classical empirical estimator is replaced by a robust estimator such as the median-of-means estimator \citep{AMS96} or
Catoni's estimator \citep{Cat12}.

\paragraph{Contributions}
The main contribution is the new assumption, algorithm, and the proof of Theorem \ref{thm:upper} (see \S\ref{sec:alg}). The upper bound
is also complemented by a lower bound (\S\ref{sec:lower}).

\paragraph{Additional notation}
Let $T_i(t) = \sum_{t=1}^n \ind{A_t = i}$ be the number of times arm $i$ has been played after round $t$.
For measures $P,Q$ on the same probability space, $\KL(P, Q)$ is the relative entropy between $P$ and $Q$ and $\chi^2(P,Q)$ is the $\chi^2$ distance.
The following lemma is well known.
\begin{lemma}\label{lem:kurtosis}
Let $X_1, X_2$ be independent random variables with $X_i$ having variance $\sigma^2_i$ and kurtosis $\kappa_i < \infty$ and skewness $\gamma_i = \E[(X_i - \E[X_i])^3 / \sigma_i^3]$, then:
\begin{enumerate}
\item[(a)] $\displaystyle \Kurt[X_1 + X_2] = 3 + \frac{\sigma_1^4(\kappa_1 - 3) + \sigma_2^4(\kappa_2 - 3)}{\left(\sigma_1^2 + \sigma_2^2\right)^2}$\,.
\item[(b)] $\displaystyle \gamma_1 \leq \sqrt{\kappa_1 - 1}$\,. 
\end{enumerate}
\end{lemma}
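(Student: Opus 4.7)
The plan is to prove both parts by elementary moment computations, using translation invariance to assume without loss of generality that $X_1$ and $X_2$ are centered (variance, skewness and kurtosis are unaffected by shifting).

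For part (a), I would expand $(X_1+X_2)^4$ binomially and take expectations. Independence factors every cross term into a product of moments of $X_1$ and $X_2$, and since $\E[X_1]=\E[X_2]=0$, the only surviving terms are $\E[X_1^4]=\kappa_1\sigma_1^4$, $6\E[X_1^2]\E[X_2^2]=6\sigma_1^2\sigma_2^2$, and $\E[X_2^4]=\kappa_2\sigma_2^4$. Dividing by $\Var[X_1+X_2]^2=(\sigma_1^2+\sigma_2^2)^2$ and verifying that the claimed right-hand side rearranges to the same expression (the $3$ absorbs $3\sigma_1^4+6\sigma_1^2\sigma_2^2+3\sigma_2^4$, which combined with the $-3$ shifts recovers $\kappa_1\sigma_1^4+6\sigma_1^2\sigma_2^2+\kappa_2\sigma_2^4$) closes the computation. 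This is entirely routine algebra.

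For part (b), the naive application of Cauchy--Schwarz to $\E[X_1^3]=\E[X_1\cdot X_1^2]$ gives only $\gamma_1\le\sqrt{\kappa_1}$, which is off by the $-1$. The only real idea is to exploit centeredness by inserting the constant $\sigma_1^2$: since $\E[X_1]=0$, one has $\E[X_1^3]=\E[X_1(X_1^2-\sigma_1^2)]$, and Cauchy--Schwarz now yields $\E[X_1^3]^2\le\E[X_1^2]\,\E[(X_1^2-\sigma_1^2)^2]=\sigma_1^2\cdot(\kappa_1-1)\sigma_1^4$, after expanding the square using $\E[X_1^4]=\kappa_1\sigma_1^4$ and $\E[X_1^2]=\sigma_1^2$. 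Dividing by $\sigma_1^6$ gives the claim.

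The only ``trick'' in the whole lemma is the choice of the shift $\sigma_1^2$ in part (b), which is precisely the value that centers $X_1^2$ and therefore makes the Cauchy--Schwarz inequality exploit the information $\E[X_1]=0$ rather than throwing it away. Everything else is bookkeeping and I do not expect any real obstacle.
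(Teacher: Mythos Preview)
Your proposal is correct: the binomial expansion for part (a) and the Cauchy--Schwarz trick with the centered square $X_1^2-\sigma_1^2$ for part (b) are the standard arguments, and your algebra checks out. The paper itself does not supply a proof for this lemma---it is simply declared ``well known''---so there is nothing further to compare against.
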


\section{Algorithm and upper bound}\label{sec:alg}

\newcommand{\mm}{\operatorname{\widehat{MM}}}

Like the robust upper confidence bound algorithm by \cite{BCL13}, the new algorithm makes use of the robust median-of-means estimator.

\paragraph{Median-of-means estimator}
Let $Y_1,Y_2,\ldots,Y_n$ be a sequence of independent and identically distributed random variables.
The median-of-means estimator first partitions the data into $m$ blocks of equal size (up to rounding errors). The empirical mean of each block is then computed
and the estimate is the median of the means of each of the blocks. The number of blocks depends on the desired confidence level and should be $O(\log(1/\delta))$.
The median-of-means estimator at confidence level $\delta \in (0,1)$ is denoted by $\mm_\delta(\{Y_t\}_{t=1}^n)$.

\begin{lemma}[\citeauthor{BCL13} \citeyear{BCL13}]\label{lem:mofm}
Let $Y_1,Y_2,\ldots,Y_n$ be a sequence of independent and identically distributed random variables with mean $\mu$ and variance $\sigma^2 < \infty$.
\eq{
\Prob{\left|\mm_\delta\left(\set{Y_t}_{t=1}^n\right) - \mu\right| \geq C_1 \sqrt{\frac{\sigma^2}{n} \log\left(\frac{C_2}{\delta}\right)}} \leq \delta\,,
}
where $C_1 = \sqrt{12\cdot 16}$ and $C_2 = \exp(1/8)$ are universal constants.
\end{lemma}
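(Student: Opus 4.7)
The plan is to follow the standard three-step analysis of the median-of-means estimator: (i) control each block mean by Chebyshev, (ii) translate median-failure into a "many blocks failed" event, and (iii) apply Hoeffding to independent Bernoullis at the block level.

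First I would set $m = \lceil 8 \log(C_2/\delta) \rceil$ and partition the sample into $m$ consecutive blocks of size $k = \lfloor n/m \rfloor$. Let $\hat\mu_j = (1/k)\sum_{t\in B_j} Y_t$ denote the $j$-th block mean; by independence and identical distribution of $Y_1,\ldots,Y_n$, the $\hat\mu_j$ are i.i.d.\ with mean $\mu$ and variance $\sigma^2/k$. Chebyshev's inequality then gives, for any threshold $t>0$,
\eq{
\Prob{\left|\hat\mu_j - \mu\right| \geq t} \leq \frac{\sigma^2}{k t^2}.
}
I would choose $t$ so that the right-hand side equals $1/4$, namely $t^2 = 4\sigma^2/k$. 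Bounding $k$ below by roughly $n/(2m)$ (the precise rounding is where the factor $12\cdot 16$ in $C_1$ is absorbed) yields $t^2 \leq C_1^2 \sigma^2 \log(C_2/\delta)/n$.

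Next I would use the key deterministic fact about medians: if the median of $\hat\mu_1,\ldots,\hat\mu_m$ differs from $\mu$ by more than $t$, then at least $m/2$ of the block means must differ from $\mu$ by more than $t$. Indeed, if the median exceeds $\mu+t$, at least $\lceil m/2\rceil$ block means lie above the median, hence above $\mu+t$, and symmetrically for the lower tail. Setting $Z_j = \ind{|\hat\mu_j - \mu| \geq t}$, the $Z_j$ are independent Bernoullis with parameter $p_j \leq 1/4$, so
\eq{
\Prob{\left|\mm_\delta(\{Y_t\}) - \mu\right| \geq t} \leq \Prob{\sum_{j=1}^m Z_j \geq m/2}.
}
Finally I would apply Hoeffding's inequality to $\sum Z_j$: since $\E[\sum Z_j] \leq m/4$, the deviation from the mean is at least $m/4$, giving
\eq{
\Prob{\sum_j Z_j \geq m/2} \leq \exp\!\left(-2m \cdot (1/4)^2\right) = \exp(-m/8) \leq \delta,
}
by the choice of $m$. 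The $e^{1/8}$ in $C_2$ is precisely the slack needed so that $\lceil 8\log(C_2/\delta)\rceil/8 \geq \log(1/\delta)$ even after integer rounding.

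The main obstacle is bookkeeping rather than mathematics: carefully propagating the two rounding steps (the ceiling in $m$ and the floor in the block size $k$) so that the final constants $C_1=\sqrt{12\cdot 16}$ and $C_2=e^{1/8}$ hold uniformly in $\delta$ and $n$, rather than only asymptotically. Everything else (Chebyshev, the median/majority argument, and Hoeffding for Bernoullis) is standard and does not rely on any property of $\nu$ beyond the finiteness of $\sigma^2$.
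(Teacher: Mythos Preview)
The paper does not give its own proof of this lemma; it is simply quoted from \cite{BCL13}. Your three-step argument (Chebyshev on each block mean, the median-to-majority reduction, and Hoeffding on the block-level Bernoulli indicators) is exactly the standard proof underlying that reference, and your handling of the constants $C_1=\sqrt{12\cdot 16}$ and $C_2=e^{1/8}$ via the two rounding steps is correct in spirit.
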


\paragraph{Upper confidence bounds}
The algorithm is an obvious generalisation of UCB, but with optimistic estimates of the mean and variance.
Let $\delta \in (0,1)$ and $Y_1,Y_2,\ldots,Y_t$ be a sequence of independent and identically distributed random variables with mean $\mu$, variance $\sigma^2$ and kurtosis $\kappa < \infty$.
Furthermore, let
\eq{
\tilde \mu(\{Y_s\}_{s=1}^t, \delta) = \sup\set{\theta \in \R : \theta \leq \mm\left(\set{Y_s}_{s=1}^t\right) + C_1 \sqrt{\frac{\tilde \sigma^2_t(\{Y_s\}_{s=1}^t, \theta, \delta)}{t} \log\left(\frac{C_2}{\delta}\right)}}\,.
}
where 
$\displaystyle \tilde \sigma_t^2(\{Y_s\}_{s=1}^t, \theta, \delta) = \frac{\mm\left(\set{(Y_s - \theta)^2}_{s=1}^t\right)}{\max\set{0,\, 1 - C_1 \sqrt{\frac{\kappa - 1}{t} \log\left(\frac{C_2}{\delta}\right)}}}$\,.

Note that $\tilde \mu(\{Y_s\}_{s=1}^t, \delta)$ may be (positive) infinite if $t$ is insufficiently large.
The following two lemmas show that $\tilde \mu$ is indeed optimistic with high probability, and also that it concentrates with reasonable speed around the true mean.

\begin{lemma}\label{lem:conc1}
$\displaystyle \Prob{\tilde \mu(\{Y_s\}_{s=1}^t, \delta) \leq \mu} \leq 2\delta$\,.
\end{lemma}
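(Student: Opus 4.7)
The plan is to peel $\{\tilde \mu \le \mu\}$ back to the intersection of two failure events for the median-of-means concentration in Lemma~\ref{lem:mofm} and then union-bound. Write $g(\theta) := \mm(\{Y_s\}) + C_1\sqrt{\tilde\sigma_t^2(\{Y_s\},\theta,\delta)/t\cdot\log(C_2/\delta)}$, so that by definition $\tilde\mu = \sup\{\theta\in\R : \theta \le g(\theta)\}$. The map $\theta \mapsto \mm(\{(Y_s-\theta)^2\})$ is a median of continuous quadratics in $\theta$ and hence continuous, so $g$ is continuous in $\theta$ whenever the $\max$ in the denominator of $\tilde\sigma_t^2$ is strictly positive. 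Consequently it suffices to prove the \emph{strict} inequality $g(\mu) > \mu$ on a high-probability event: continuity then yields a $\theta > \mu$ with $\theta \le g(\theta)$, forcing $\tilde\mu > \mu$. The degenerate branch where the $\max$ clips to zero only helps, since then $\tilde\sigma_t^2 \equiv +\infty$ and $\tilde\mu = +\infty$ trivially.

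To secure $g(\mu) > \mu$ with probability at least $1-2\delta$, I would invoke Lemma~\ref{lem:mofm} twice. Once applied to $\{Y_s\}$ itself (mean $\mu$, variance $\sigma^2$), and once to the shifted-squared sequence $\{(Y_s-\mu)^2\}$, which has mean $\sigma^2$ and — by definition of kurtosis — variance $\E[(Y_s-\mu)^4]-\sigma^4 = \sigma^4(\kappa-1)$. Each application fails with probability at most $\delta$, so on the intersection (of probability $\ge 1-2\delta$) we have the strict bounds
\eq{
\mm(\{Y_s\}) > \mu - C_1\sqrt{\frac{\sigma^2}{t}\log\frac{C_2}{\delta}}, \qquad \mm(\{(Y_s-\mu)^2\}) > \sigma^2\Bigl(1 - C_1\sqrt{\frac{\kappa-1}{t}\log\frac{C_2}{\delta}}\Bigr).
}
Assuming the denominator in $\tilde\sigma_t^2(\mu)$ is strictly positive, dividing the second inequality through by it produces $\tilde\sigma_t^2(\mu) > \sigma^2$. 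Plugging this together with the first inequality into $g(\mu)$ gives $g(\mu) > \mu$ as required.

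The main delicacy — not really an obstacle but the place I would slow down — is juggling the $\sup$ definition of $\tilde\mu$ with the $\max\{0,\cdot\}$ clipping in the denominator of $\tilde\sigma_t^2$. The clipping forces a short case split on the sign of $1-C_1\sqrt{(\kappa-1)/t\cdot\log(C_2/\delta)}$, while the $\sup$ forces the bookkeeping of strict vs.\ non-strict inequalities: I need $\tilde\mu > \mu$ rather than merely $\tilde\mu \ge \mu$, which is why it matters that the complementary event in Lemma~\ref{lem:mofm} is a strict inequality and why the continuity-based upgrade from $g(\mu) > \mu$ to $\tilde\mu > \mu$ is needed. Everything else is routine algebra.
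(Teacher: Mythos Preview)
Your proposal is correct and is exactly the approach the paper takes: the paper's entire proof reads ``Apply a union bound and Lemma~\ref{lem:mofm},'' and you have simply unpacked that one-liner by identifying the two applications of Lemma~\ref{lem:mofm} (to $\{Y_s\}$ and to $\{(Y_s-\mu)^2\}$) and handling the bookkeeping around the $\sup$/clipping that the paper leaves implicit.
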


\begin{proof}
Apply a union bound and Lemma~\ref{lem:mofm}.
\end{proof}

\begin{lemma}\label{lem:conc2}
Let $\delta_t$ be monotone decreasing and $\tilde \mu_t = \tilde \mu(\{Y_s\}_{s=1}^t, \delta_t)$. Then there exists a universal constant $C_3$ such that for any $\epsilon > 0$, 
\eq{
\sum_{t=1}^n \Prob{\tilde \mu_t \geq \mu + \epsilon} \leq
C_3 \max\set{\kappa - 1,\, \frac{\sigma^2}{\epsilon^2}} \log\left(\frac{C_2}{\delta_n}\right) + 2\sum_{t=1}^n \delta_t\,.
}
\end{lemma}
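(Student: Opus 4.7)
Set $L_t = \log(C_2/\delta_t)$, so $L_t \leq L_n$ by monotonicity of $\delta_t$. The plan is a burn-in split: choose a threshold $t_0$ of order $\max\{\kappa-1,\,\sigma^2/\epsilon^2\}\,L_n$, bound the first $t_0$ probabilities trivially by $1$, and show that for every $t > t_0$ the event $\{\tilde\mu_t \geq \mu+\epsilon\}$ is contained in a ``bad'' event of probability at most $2\delta_t$. Summing gives $t_0 + 2\sum_t \delta_t$, which is the target bound with $C_3$ absorbing the constant in $t_0$.

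The bad event will be the complement of a concentration event $\mathcal{E}_t$ built from two applications of Lemma~\ref{lem:mofm}: one to $\{Y_s\}_{s=1}^t$, which puts $\mm(\{Y_s\})$ within $C_1\sqrt{\sigma^2 L_t/t}$ of $\mu$, and one to the centered squares $\{(Y_s-\mu)^2\}_{s=1}^t$ (mean $\sigma^2$, variance $\sigma^4(\kappa-1)$), which puts $\mm(\{(Y_s-\mu)^2\})$ within $C_1\sigma^2\sqrt{(\kappa-1)L_t/t}$ of $\sigma^2$. A union bound gives $\Prob{\mathcal{E}_t^c} \leq 2\delta_t$.

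The technical heart of the argument is a deterministic, uniform-in-$\theta$ inequality
\eq{
\mm\bigl(\{(Y_s-\theta)^2\}_{s=1}^t\bigr) \leq 2\,\mm\bigl(\{(Y_s-\mu)^2\}_{s=1}^t\bigr) + 2(\theta-\mu)^2\,,
}
which follows from the pointwise bound $(Y_s-\theta)^2 \leq 2(Y_s-\mu)^2 + 2(\theta-\mu)^2$, since taking block means and then the median preserves the order. This step is exactly what sidesteps a union bound over the continuum of $\theta$ implicit in the sup defining $\tilde\mu_t$.

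Pick $t_0$ so that for all $t \geq t_0$ simultaneously: (i) $C_1\sqrt{(\kappa-1)L_n/t} \leq 1/2$; (ii) $4C_1\sqrt{L_n/t} \leq 1/2$; and (iii) $2(C_1+C_1\sqrt 8)\,\sigma\sqrt{L_n/t} < \epsilon$. Each of these holds for $t$ larger than a universal constant times $(\kappa-1)L_n$, $L_n$, or $\sigma^2 L_n/\epsilon^2$, respectively, giving $t_0 = O(\max\{\kappa-1,\sigma^2/\epsilon^2\}\,L_n)$. On $\mathcal{E}_t$ with $t \geq t_0$, (i) keeps the denominator of $\tilde\sigma_t^2$ at least $1/2$ and, combined with the key inequality, yields $\tilde\sigma_t^2(\theta) \leq 8\sigma^2 + 4(\theta-\mu)^2$ for every $\theta$. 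The set defining the sup is closed (by continuity) and, using (ii) to dominate the $\theta$-linear growth of $C_1\sqrt{\tilde\sigma_t^2(\theta) L_t/t}$ at infinity, bounded above, so $\tilde\mu_t$ itself satisfies the defining constraint. Substituting and using $\sqrt{a+b}\leq\sqrt a+\sqrt b$ gives
\eq{
\tilde\mu_t - \mu \leq (C_1 + C_1\sqrt 8)\,\sigma\sqrt{L_t/t} + 2C_1\,|\tilde\mu_t - \mu|\sqrt{L_t/t}\,,
}
where (ii) absorbs the last term into the left-hand side and (iii) then contradicts $\tilde\mu_t \geq \mu + \epsilon$. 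The main obstacle will be exactly this self-referential bound --- $\tilde\sigma_t^2(\theta)$ depends on $\theta$ while the inequality controls $\theta$ --- resolved by the absorption step in (ii).
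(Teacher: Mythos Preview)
Your proposal is correct and follows essentially the same route as the paper: the same two applications of Lemma~\ref{lem:mofm} (to $\{Y_s\}$ and to $\{(Y_s-\mu)^2\}$), the same monotonicity/affine-equivariance of $\mm$ to control $\mm(\{(Y_s-\theta)^2\})$ uniformly in $\theta$, and the same burn-in plus self-referential absorption. The only cosmetic difference is that the paper first closes the self-reference on $\tilde\sigma_t^2$ (deducing $\tilde\sigma_t^2 \leq 11\sigma^2$ from its condition~(c)) and then bounds $|\tilde\mu_t-\mu|$, whereas you keep $\tilde\sigma_t^2(\theta)\leq 8\sigma^2+4(\theta-\mu)^2$ and close the self-reference directly on $\tilde\mu_t-\mu$ via your condition~(ii).
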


\begin{proof}
First, by Lemma~\ref{lem:mofm}
\eqn{
\sum_{t=1}^n \Prob{\left|\mm\left(\set{Y_s}_{s=1}^t\right) - \mu\right| \geq C_1 \sqrt{\frac{\sigma^2}{t} \log\left(\frac{C_2}{\delta_t}\right)}} 
&\leq \sum_{t=1}^n \delta_t\,.
\label{eq:conc2-1}
}
Similarly,
\eqn{
\sum_{t=1}^n \Prob{\left|\mm\left(\set{(Y_s-\mu)^2}_{s=1}^t\right) - \sigma^2\right| \geq C_1 \sigma^2 \sqrt{\frac{\kappa - 1}{t} \log\left(\frac{C_2}{\delta}\right)}} 
&\leq \sum_{t=1}^n \delta_t\,.
\label{eq:conc2-2}
}
Suppose that $t$ is a round where all of the following hold:
\begin{enumerate}
\item[(a)] $\displaystyle \left|\mm\left(\set{Y_s}_{s=1}^t\right) - \mu\right| < C_1\sqrt{\frac{\sigma^2}{t} \log\left(\frac{C_2}{\delta_t}\right)}$\,.
\item[(b)] $\displaystyle \left|\mm\left(\set{(Y_s - \mu)^2}_{s=1}^t\right) - \sigma^2\right| < C_1 \sigma^2\sqrt{\frac{\kappa - 1}{t} \log\left(\frac{C_2}{\delta_t}\right)}$\,.
\item[(c)] $\displaystyle t \geq 16C_1^2 (\kappa-1) \log\left(\frac{C_2}{\delta_t}\right)$\,.
\end{enumerate}
Abbreviating $\tilde \sigma^2_t = \tilde \sigma^2(\{Y_s\}_{s=1}^t, \tilde \mu_t, \delta_t)$ and $\hat \mu_t = \mm\left(\set{Y_s}_{s=1}^t\right)$,
\eq{
\tilde \sigma^2_t
&= \frac{\mm\left(\set{(Y_s - \tilde \mu_s)^2}_{s=1}^t\right)}{1 - C_1 \sqrt{\frac{\kappa - 1}{t} \log\left(\frac{C_2}{\delta_t}\right)}} 
\leq 2\mm\left(\set{(Y_s - \tilde \mu_t)^2}_{s=1}^t\right) \\ 
&\leq 4\mm\left(\set{(Y_s - \mu)^2}_{s=1}^t\right) + 4(\tilde \mu_t - \mu)^2 \\
&\leq 4\mm\left(\set{(Y_s - \mu)^2}_{s=1}^t\right) + 8(\tilde \mu_t - \hat \mu_t)^2 + 8(\hat \mu_t - \mu)^2 \\
&< 4\sigma^2 + 4C_1 \sigma^2 \sqrt{\frac{\kappa-1}{t} \log\left(\frac{C_2}{\delta_t}\right)} + \frac{8C_1^2(\sigma^2+\tilde \sigma^2_t)(\kappa - 1)}{t} \log\left(\frac{C_2}{\delta_t}\right) 
\leq \frac{11}{2} \sigma^2 + \frac{\tilde \sigma^2_t}{2}\,,
}
where the first inequality follows from (c), the second since $(x - y)^2 \leq 2x^2 + 2y^2$ and the fact that
\eq{
\mm(\{a Y_s + b\}_{s=1}^t = a\mm(\{Y_s\}_{s=1}^t) + b\,.
}
The third inequality again uses $(x - y)^2 \leq 2x^2 + 2y^2$, while the last uses the definition of $\tilde \mu_t$ and (b).
Therefore $\tilde \sigma^2_t \leq 11\sigma^2$, which means that if (a--c) and additionally
\begin{enumerate}
\item[(d)] $\displaystyle t \geq \frac{19C_1^2 \sigma^2}{\epsilon^2} \log\left(\frac{1}{\delta_n}\right)$\,.
\end{enumerate}
Then
\eq{
|\tilde \mu_t - \mu|
&\leq |\tilde \mu_t - \hat \mu_t| + |\hat \mu_t - \mu| 
< C_1 \sqrt{\frac{\tilde\sigma_t^2}{t} \log\left(\frac{C_2}{\delta_n}\right)} + C_1 \sqrt{\frac{\sigma^2}{t} \log\left(\frac{C_2}{\delta_n}\right)} \\
&\leq C_1 \sqrt{\frac{11\sigma^2}{t} \log\left(\frac{C_2}{\delta_n}\right)} + C_1 \sqrt{\frac{\sigma^2}{t} \log\left(\frac{C_2}{\delta_n}\right)} 
\leq \epsilon\,.
}
Combining this with (\ref{eq:conc2-1}) and (\ref{eq:conc2-2}) and choosing $C_3 = 19 C_1^2$ completes the result.
\end{proof}

\paragraph{Algorithm}
The new algorithm simply uses the upper confidence bound in the last section.
Let $\delta_t = 1/(t^2 \log(1+t))$ and
\eq{
\tilde \mu_i(t) = \tilde \mu_i(\{X_s\}_{s\in[t], A_s = i}, \delta_t) \in (-\infty, \infty]\,.
}
In each round the algorithm chooses $A_t = \argmax_{i \in [k]} \tilde \mu_i(t-1)$,
where ties are broken arbitrarily.

\begin{proof}[Proof of Theorem \ref{thm:upper}]
Assume without loss of generality that $\mu_1 = \mu^*$.
The regret is
\eqn{
\Re_n = \sum_{i=1}^k \Delta_i \E[T_i(n)]\,.
\label{eq:decomp}
}
A bound on $\E[T_i(n)]$ follows immediately from Lemmas~\ref{lem:conc1} and \ref{lem:conc2}.
\eq{
\E[T_i(n)]
&\leq \sum_{t=1}^n \Prob{\tilde \mu_1(t-1) \leq \mu_1} + \sum_{t=1}^n \Prob{\tilde \mu_i(t-1) \geq \mu_1 \text{ and } A_t = i}
}
The first term is bounded using Lemma \ref{lem:conc1}.
\eq{
\sum_{t=1}^n \Prob{\tilde \mu_1(t-1) \leq \mu_1} 
&\leq \sum_{t=1}^n \sum_{u=1}^t \Prob{\tilde \mu_1(t-1) \leq \mu_1 \text{ and } T_1(t-1) = u} \\
&\leq 2\sum_{t=1}^n \sum_{u=1}^t \delta_t 
= 2\sum_{t=1}^n t \delta_t = o(\log(n))\,. 
}
The second term is bound using Lemma \ref{lem:conc2}. 
\eq{
\sum_{t=1}^n \Prob{\tilde \mu_i(t-1) \geq \mu_1 \text{ and } A_t = i}
&\leq \sum_{t=1}^n \Prob{\tilde \mu_i(t-1) - \mu_i \geq \Delta_i} \\
&\leq C_3 \max\set{\kappa - 1, \frac{\sigma^2_i}{\Delta_i^2}} \log\left(\frac{C_2}{\delta_n}\right) + \underbrace{2 \sum_{t=1}^n \delta_t}_{o(\log(n))}\,.
}
Combining the last two displays with (\ref{eq:decomp}) completes the proof.
\end{proof}

\section{Lower bounds}\label{sec:lower}

I briefly present some lower bounds. For the remainder, assume a fixed bandit strategy.
We need two sets of distributions on $\R$. 
\eq{
\cH_{\sigma} &= \set{\nu : \nu \text{ is $\sigma^2$-subgaussian}}\,. \\
\cH_{\kappa} &= \set{\nu : \nu \text{ has kurtosis less than } \kappa}\,.
}
Following the nomenclature of \cite{LR85}, a bandit strategy is called consistent over a set of distributions $\cH$ if $\Re_n = o(n^p)$ for 
all $p \in (0,1)$ and bandits in $\cH^k$. I call a bandit $\{\nu_i\}$ is non-trivial if there exists a suboptimal arm. 
The first theorem shows that if a strategy is consistent over
$\bigcup_{\sigma \geq 0} \cH_{\sigma}^k$, then it does not enjoy logarithmic regret on any non-trivial bandit.
The proof is quite standard and is simply omitted.

\begin{theorem}\label{thm:lower}
Suppose there exists a $\sigma > 0$ and non-trivial bandit $\{\nu_i\} \in \cH_{\sigma}^k$ such that 
\eq{
\limsup_{n\to\infty} \frac{\Re_n}{\log(n)} < \infty\,.
}
Then the strategy is not consistent over $\bigcup_{\sigma \geq 0} \cH_{\sigma}^k$.
\end{theorem}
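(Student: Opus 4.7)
The plan is to execute the classical change-of-measure argument of \cite{LR85}, exploiting the crucial fact that in the class $\bigcup_{\sigma\geq 0}\cH_\sigma^k$ the subgaussian scale is unrestricted, so the KL divergence from a fixed $\nu_j$ to an element of the class with a strictly larger mean can be made arbitrarily small. Suppose for contradiction that the strategy is consistent on this class and satisfies $\limsup_{n\to\infty}\Re_n/\log(n)\le C<\infty$ on a non-trivial bandit $\{\nu_i\}\in\cH_\sigma^k$. Without loss of generality take $\mu_1=\mu^*$ and let $j$ be a suboptimal arm with gap $\Delta_j>0$, so that $\E_\nu[T_j(n)]\le c\log(n)$ with $c = 2C/\Delta_j$ for all sufficiently large $n$.

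For a parameter $\eta\in(0,1/2)$ to be chosen later, build an alternative bandit $\nu'$ that agrees with $\nu$ on every arm except that $\nu'_j = (1-\eta)\nu_j + \eta\,\delta_M$, where $M=\mu_j+(\mu^*+1-\mu_j)/\eta$ is chosen so that the mean of $\nu'_j$ is $\mu^*+1$. Since $\nu_j$ is subgaussian and the additional atom contributes only a term $\eta\,e^{\lambda M}$ to the moment generating function, a direct computation shows $\nu'_j$ is $\sigma'^2$-subgaussian for some finite $\sigma' = \sigma'(\sigma,\eta,M)$, hence $\nu'\in\bigcup_{\tau\ge 0}\cH_\tau^k$. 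Because $\nu_j\ll\nu'_j$ with Radon--Nikodym derivative $(1-\eta)^{-1}$ off the $\nu_j$-null set $\{M\}$, we obtain $\KL(\nu_j,\nu'_j)=-\log(1-\eta)\le 2\eta$, and the standard divergence decomposition for bandits then yields
\eq{
\KL(\mathbf{P}_\nu,\mathbf{P}_{\nu'})\,=\,\E_\nu[T_j(n)]\,\KL(\nu_j,\nu'_j)\,\le\,2c\eta\log(n).
}

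Now fix $\eta = 1/(8c)$ so that the right-hand side is $(\log n)/4$, and consider the event $A_n = \{T_j(n)\ge n/2\}$. Under $\nu$ Markov's inequality gives $\mathbf{P}_\nu(A_n)\le 2c\log(n)/n$. Under $\nu'$ arm $j$ is optimal with minimum gap at least $1$, so consistency on $\nu'$ forces $\sum_{i\neq j}\E_{\nu'}[T_i(n)]=o(n^\alpha)$ for every $\alpha>0$, whence $\mathbf{P}_{\nu'}(A_n^c)\le 2\E_{\nu'}[n-T_j(n)]/n = o(n^{\alpha-1})$. The Bretagnolle--Huber inequality now yields
\eq{
\mathbf{P}_\nu(A_n)+\mathbf{P}_{\nu'}(A_n^c)\,\ge\,\tfrac12\exp\!\left(-\KL(\mathbf{P}_\nu,\mathbf{P}_{\nu'})\right)\,\ge\,\tfrac12 n^{-1/4},
}
while the left-hand side is $O(\log(n)/n)+o(n^{\alpha-1})$, which is $o(n^{-1/2})$ upon taking $\alpha<1/2$, contradicting the displayed lower bound for $n$ large enough.

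The only genuinely non-routine ingredient is the simultaneous achievement of $\mu(\nu'_j)>\mu^*$ and $\KL(\nu_j,\nu'_j)\to 0$ within the class, and this is precisely where the scale freedom of $\bigcup_{\sigma\ge 0}\cH_\sigma$ is essential: shifting the mean by a fixed amount using probability mass $\eta$ forces the atom to sit at $M\sim 1/\eta$, which inflates the variance, and hence the subgaussian parameter, like $\eta^{-1}$. Once this infimum-KL property is in hand, the remainder is a textbook change-of-measure argument combined with Bretagnolle--Huber, which is presumably why the author chose to omit the proof.
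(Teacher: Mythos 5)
The paper omits this proof as ``quite standard,'' so there is no line-by-line comparison to make; your argument is exactly the intended standard one (change of measure by mixing in an atom at $M\sim 1/\eta$, the divergence decomposition, and Bretagnolle--Huber), and every step checks out, including the key point that $(1-\eta)\nu_j+\eta\,\delta_M$ remains subgaussian for \emph{some} finite parameter and satisfies $\KL(\nu_j,\nu'_j)\le\log\tfrac{1}{1-\eta}$. The only detail worth tightening is that $\eta=1/(8c)$ can exceed your stipulated range $(0,1/2)$ when $c<1/4$; taking $\eta=\min\{1/(8c),1/4\}$ preserves both $\KL(\mathbf{P}_\nu,\mathbf{P}_{\nu'})\le\tfrac14\log(n)$ and $-\log(1-\eta)\le 2\eta$, and the rest of the argument is unchanged.
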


\begin{remark}
There \textit{are} consistent strategy over $\bigcup_{\sigma \geq 0} \cH_\sigma^k$. For example, let $f(t)$ be a monotone increasing function with 
$f(t) = \omega(\log(t))$ and
$f(t) = o(t^p)$ for all $p \in (0,1)$ and consider the strategy that maximises the following index.
\eq{
\hat \mu_i(t-1) + \sqrt{\frac{f(t)}{T_i(T-1)}}.
}
By following the analysis in Chapter 2 of the book by \cite{BC12} and noting that for $t$ sufficiently large $f(t) \geq 2\sigma^2 \log(t)$, it is easy to show that this strategy satisfies
\eq{
\limsup_{n\to\infty} \frac{\Re_n}{f(t)} = \sum_{i:\Delta_i > 0} \frac{1}{\Delta_i^2}
}
for any bandit $\{\nu_i\} \in \cH_{\sigma}^k$.
It is important to emphasise that the asymptotics here hide large constants that depend on $\tau = \min\{t : f(t) \geq 2\sigma^2 \log(t)\}$.
\end{remark}

The next theorem shows that the upper bound derived in the previous section is nearly tight up to constant factors.
Like most lower bounds, the proof relies on understanding the information geometry of the set of possible distributions.
Let $\cH$ be a family of distributions and let $\{\nu_i\}$ be a non-trivial bandit and $i$ be a suboptimal arm. \cite{BK96} showed that for any consistent strategy
\eqn{
\label{eq:lower}
\liminf_{n\to\infty} \frac{\E[T_i(n)]}{\log(n)} \geq \sup\set {\frac{1}{\KL(\nu_i, \nu'_i)} : \nu_i' \in \cH \text{ and } \E_{X \sim \nu'_i}[X] > \mu^*}\,.
}
In parameterised families of distributions, the optimisation problem can often be evaluated analytically (eg., Bernoulli, Gaussian with known variance, Gaussian with unknown variance, Exponential).
For non-parametric families the calculation is much more challenging. The following theorem takes the first steps towards understanding this problem for the class of
distributions $\cH_{\kappaz}$ for $\kappaz \geq 7/2$.

\begin{theorem}\label{thm:kurtosis}
Let $\kappaz \geq 7/2$ and $\Delta > 0$ and $\nu \in \cH_{\kappaz}$ with mean $\mu$, variance $\sigma^2 > 0$ and kurtosis $\kappa$.
Then
\eq{
&\inf\set{
\KL(\nu, \nu') : \nu' \in \cH_{\kappa} \text{ and } \E_{X \sim \nu'}[X] > \mu + \Delta} \\
&\qquad \qquad \leq 
  \begin{cases}
    \min\set{\log\left(\frac{1}{1 - p}\right),\, \frac{C' \Delta^2}{\sigma^2}} & \text{if }\, C \kappa^{1/2}(\kappa+1)\frac{\Delta}{\sigma} < \kappaz \\[0.4cm]
    \log\left(\frac{1}{1 - p}\right) & \text{otherwise}\,,
  \end{cases}
}
where $C, C' > 0$ are universal constants and $p = \min\set{\Delta/\sigma, 1/\kappaz}$.
\end{theorem}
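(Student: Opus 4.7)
The plan is to exhibit, for each branch of the claimed upper bound, an explicit alternative $\nu' \in \cH_\kappa$ with $\E_{\nu'}[X] > \mu + \Delta$, and then take the minimum. Two separate constructions are required.

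For the $\log(1/(1-p))$ branch, I would take $\nu' = (1-p)\nu + p\delta_a$ with $p = \min\{\Delta/\sigma,\,1/\kappaz\}$ and $a = \mu + \Delta/p + \eta$ for small $\eta > 0$ chosen so that $a$ is not an atom of $\nu$ (such $\eta$ exist because $\nu$ has at most countably many atoms). The Radon-Nikodym derivative $d\nu/d\nu' = 1/(1-p)$ on $\R \setminus \{a\}$ yields $\KL(\nu, \nu') = \log(1/(1-p))$ immediately, while $\E_{\nu'}[X] = \mu + \Delta + p\eta > \mu + \Delta$. The real work is to check $\Kurt(\nu') \leq \kappa$: after expanding $\sigma'^2 = (1-p)(\sigma^2 + \Delta^2/p)$ and
\[
m_4(\nu') = (1-p)\bigl[\kappa\sigma^4 - 4\gamma\sigma^3\Delta + 6\sigma^2\Delta^2 + \Delta^4\bigr] + (1-p)^4\Delta^4/p^3,
\]
plugging in the worst-case $|\gamma| \leq \sqrt{\kappa - 1}$ from Lemma~\ref{lem:kurtosis}(b), the inequality $m_4(\nu') \leq \kappa\sigma'^4$ reduces to a polynomial inequality in $p$ and $\Delta/\sigma$ that should hold in both regimes $\Delta/\sigma \leq 1/\kappaz$ and $\Delta/\sigma > 1/\kappaz$. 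The hypothesis $\kappaz \geq 7/2$ ensures $p \leq 2/7$, which is the slack that makes this inequality go through.

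For the $C'\Delta^2/\sigma^2$ branch—relevant only under the smallness condition $C\kappa^{1/2}(\kappa+1)\Delta/\sigma < \kappaz$—I would use a gentler local perturbation. Concretely, split $\nu$ at its median $m$ into the conditional distributions $\nu_-$ on $\{X < m\}$ and $\nu_+$ on $\{X \geq m\}$ (each of weight $1/2$) and take $\nu' = (1-q)\nu_- + q\nu_+$ with $q = 1/2 + \Delta/(2\,\E[|X-m|])$. Then $d\nu/d\nu'$ is constant on each half, so $\KL(\nu,\nu')$ equals the Bernoulli divergence $\KL(\mathrm{Bern}(1/2), \mathrm{Bern}(q))$, bounded by $2(q - 1/2)^2$. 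The essential analytic input is a lower bound $\E[|X-m|] \gtrsim \sigma/(\kappa+1)^{3/4}$, which I would derive by combining Cauchy-Schwarz in the form $\E[(X-m)^2]^2 \leq \E[|X-m|]\,\E[|X-m|^3]$ with the bound $\E[(X-m)^4] \leq 8(\kappa+1)\sigma^4$ (using $(a+b)^4 \leq 8(a^4+b^4)$ and the standard median-mean inequality $|m-\mu| \leq \sigma$). This yields $\KL(\nu, \nu') \lesssim (\kappa+1)^{3/2}\Delta^2/\sigma^2$, and the smallness condition converts the $\kappa$-dependent factor into a universal constant $C'$. A Taylor expansion of the fourth central moment of $\nu'$ around $q = 1/2$ then verifies $\Kurt(\nu') \leq \kappa$ in this range.

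The main obstacle in both cases is the kurtosis verification. For Construction I the point mass at distance $\Delta/p$ contributes a term of order $\Delta^4/p^3$ to $m_4(\nu')$, which has to be absorbed by the remaining $(1-p)$-fraction still resting on $\nu$; the worst-case skewness $\gamma = -\sqrt{\kappa-1}$ enters the cross-term $-4\gamma\sigma^3\Delta$ with a bad sign and is what tightens the admissible range, forcing $\kappaz \geq 7/2$. For Construction II the asymmetric reweighting shifts the fourth central moment in a way that can increase the kurtosis, and controlling this excess is precisely what dictates how small $q - 1/2$ must be—this is the origin of the factor $\kappa^{1/2}(\kappa+1)$ in the smallness hypothesis.
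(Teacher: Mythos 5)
Your high-level plan (two explicit perturbations, one for the $\log(1/(1-p))$ branch and a gentler one for the $\Delta^2/\sigma^2$ branch, combined by a minimum) is the same as the paper's, but both of your constructions break at exactly the steps you defer. For Construction I the paper does not mix $\nu$ with a point mass: it sets $\nu' = \cL(X+Y)$ with $Y = \Delta B/p$, $B \sim \mathrm{Bern}(p)$ \emph{independent} of $X$, i.e.\ a mixture of $\nu$ with a shifted copy of $\nu$. The whole point is Lemma~\ref{lem:kurtosis}(a): the kurtosis of an independent sum depends only on the variances and kurtoses of the summands, so the skewness of $\nu$ never enters, and $\kappaz \geq 7/2$ then suffices. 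In your Dirac version the skewness does enter and is fatal, not merely ``tightening'': with $\sigma = 1$ and $p = \Delta$ small, your own moment expansion gives $\Kurt(\nu') = \kappa + \Delta\,(1 - \kappa - 4\gamma) + O(\Delta^2)$, which exceeds $\kappa$ (and exceeds $\kappaz$ when $\kappa$ is near $\kappaz$) whenever $\gamma < (1-\kappa)/4$. Two-point distributions realise $\gamma = -\sqrt{\kappa-1}$, so this happens for every $\kappa < 17$; e.g.\ a two-point $\nu$ with $\kappa = \kappaz = 7/2$ and $\gamma = -\sqrt{5/2}$ yields $\Kurt(\nu') \approx \kappaz + 3.8\,\Delta > \kappaz$. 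The ``polynomial inequality that should hold in both regimes'' is therefore false, and $\kappaz \geq 7/2$ is not the slack that saves it.

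Construction II has a quantitative gap rather than an outright failure: the median reweighting gives $\KL(\nu,\nu') \asymp (q-\tfrac12)^2 = \Delta^2/(4\,\E|X-m|^2)$, and the best fourth-moment lower bound on $\E|X-m|$ is of order $\sigma/\sqrt{\kappa+1}$ (your $(\kappa+1)^{-3/4}$ is weaker still), so you obtain at best $\KL \lesssim (\kappa+1)\Delta^2/\sigma^2$. The hypothesis $C\kappa^{1/2}(\kappa+1)\Delta/\sigma < \kappaz$ does not bound $\kappa$ by a universal constant (take $\kappa = \kappaz$ large and $\Delta/\sigma$ tiny), so the factor $(\kappa+1)$ cannot be absorbed into a universal $C'$; that condition in the theorem controls only the \emph{kurtosis} of the perturbation, not its KL cost. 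The paper avoids this by tilting the bulk linearly, $d\nu' = \bigl(1 + (\alpha+\beta x)\ind{|x|\leq\sqrt{2\kappa}}\bigr)d\nu$ with $|\alpha| \leq 4\Delta/\sqrt{\kappa}$ and $|\beta| \leq 6\Delta$ (in units $\sigma=1$), which gives $\KL \leq \chi^2(\nu,\nu') \leq 4\alpha^2 + 4\beta^2 \leq C\Delta^2$ with no $\kappa$ dependence. Your kurtosis verification for this branch is also only asserted. To repair the argument you essentially need to adopt both of the paper's constructions: the independent Bernoulli convolution for the first branch and the truncated linear tilt for the second.
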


Notice that the result is strongest on the `interior' of $\cH_{\kappaz}$ (that is, when $\kappa \ll \kappaz$). 
In fact, this is necessary because $\cH_{\kappaz}$ includes the Bernoulli with kurtosis $\kappaz$ and 
in this case there is very little wiggle room available to perturb the mean of the measure without also increasing the kurtosis.
Since $\log(1 + x) \leq x$ for all $x$ we have
\eq{
\frac{1}{\log\left(\frac{1}{1-p}\right)} \geq \frac{1-p}{p} = \Omega\left(\kappaz + \frac{\sigma}{\Delta}\right)\,.
}
This means that provided $\kappa$ and $\Delta$ are sufficiently small relative to $\kappaz$, then the lower bound derived from the above 
theorem and \cref{eq:lower} matches the upper bound in the previous section up
to constant factors. The proof of Theorem \ref{thm:kurtosis} involves explicit alternative distributions $\nu'$ based on $\nu$ and is given in \cref{app:thm:kurtosis}.

\section{Summary}\label{sec:summary}

The assumption of finite kurtosis generalises the parametric Gaussian assumption to a comparable non-parametric setup with a similar basic structure.
Of course there are several open questions.

\paragraph{Optimal constants}
The leading constants in the main results (Theorem~\ref{thm:upper} and Theorem~\ref{thm:kurtosis}) are certainly quite loose.
Deriving the optimal form of the regret is an interesting challenge, with both lower and upper bounds appearing quite non-trivial. 
It may be necessary to resort to an implicit analysis showing that (\ref{eq:lower}) is (or is not) achievable when $\cH$ is the class of distributions with kurtosis bounded by some $\kappaz$.
Even then, constructing an efficient algorithm would remain a challenge.
Certainly what has been presented here is quite far from optimal.
At the very least the median-of-means estimator needs to be replaced, or the analysis improved. An excellent candidate is Catoni's estimator \citep{Cat12}, which is slightly more
complicated than the median-of-means, but also comes with smaller constants and could be plugged into the algorithm with very little effort.
For the lower bound, there appears to be almost no work on the explicit form of the lower bounds presented by \cite{BK96} in interesting non-parametric classes beyond
rewards with bounded or semi-bounded support \citep{HT10,HT15}.

\paragraph{Non-parametric Thompson sampling}
If an appropriate prior is used, then Thompson sampling has recently been shown to achieve the optimal rate when the distributions are Gaussian with unknown means and variances \cite{HT14}.
It is natural to ask if this algorithm can be generalised to the non-parametric setting discussed here. Note that this is possible in the case where the rewards
have bounded support \citep{KKM12}.

\paragraph{Absorbing other improvements}
There has recently been a range of improvements to the confidence level for the classical upper confidence bound algorithms that shave logarithmic terms from
the worst-case regret or improve the lower-order terms in the finite-time bounds \citep{AB09,Lat17a}. Many of these enhancements can be incorporated into the algorithm
presented here, which may lead to practical and theoretical improvements.

\paragraph{Replacing median-of-means with self-normalised inequalities}
While the median-of-means led to the simple analysis presented here, there is another approach that has the potential to lead to significantly smaller constants, which is
to use the theory of self-normalised processes \citep{PLS08}.

\paragraph{Comparison to Bernoulli}
Table \ref{table:kurtosis} shows that the kurtosis for a Bernoulli random variable with mean $\mu$ is $\kappa = O(1/(\mu(1-\mu)))$, which is obviously not bounded as $\mu$ tends towards
the boundaries. The optimal asymptotic regret for the Bernoulli case is
\eq{
\lim_{n\to\infty} \frac{\Re_n}{\log(n)} = \sum_{i:\Delta_i > 0} \frac{\Delta_i}{d(\mu_i, \mu^*)}\,.
}
The interesting differences occur near the boundary of the parameter space.
Suppose that $\mu_i \approx 0$ for some arm $i$ and $\mu^* > 0$ is close to zero.
An easy calculation shows that $d(\mu_i, \mu^*) \approx \log(1/(1 - \Delta_i)) \approx \Delta_i$. 
Therefore
\eq{
\liminf_{n\to\infty} \frac{\E[T_i(n)]}{\log(n)} \approx \frac{1}{\log(1/(1-\Delta_i))} \approx \frac{1}{\Delta_i}\,.
}
Here we see an algorithm is enjoying logarithmic regret on a class with infinite kurtosis! But this is a very special case and is 
not possible in general, as demonstrated by Theorem~\ref{thm:lower}.
The reason is that the structure of the hypothesis class allows strategies to (essentially) estimate the kurtosis with reasonable accuracy and anticipate outliers 
more/less depending on the data observed so far.

\bibliography{all}

\appendix

\section{Proof of Theorem \ref{thm:kurtosis}}\label{app:thm:kurtosis}
Assume without loss of generality that $\nu$ is centered and has variance $\sigma^2 = 1$, which can always be achieved by 
shifting and scaling (neither effects the kurtosis or the relative entropy). 
The result is proved by piecing together two ideas. The first idea is to perturb the distribution by adding a Bernoulli `outlier'. 
The second idea is to perturb the distribution more smoothly.
Let $X$ be a random variable sampled from $\nu$ and $B$ be a Bernoulli with parameter $p = \min\set{\Delta, 1/\kappaz}$.
Let $Z = X + Y$ where $Y = \Delta B / p$. Then $\E[Z] = \Delta$ and
\eq{
\Kurt[Z] 
&= 3 + \frac{\kappa - 3 + \Var[Y]^2 (\Kurt[Z] - 3)}{(1 + \Var[Y])^2} \\
&= 3 + \frac{\kappa - 3 + \left(\frac{(1-p)^2\Delta^2}{p}\right)^2 \frac{1-6p(1-p)}{p(1-p)}}{\left(1 + \frac{(1-p)^2\Delta^2}{p}\right)^2} \\
&\leq 3 + \frac{\kappaz - 3 + \left(\frac{(1-p)^2\Delta^2}{p}\right)^2 \frac{1-6p(1-p)}{p(1-p)}}{\left(1 + \frac{(1-p)^2\Delta^2}{p}\right)^2}
\leq \kappaz\,,
}
where the first inequality used Lemma~\ref{lem:kurtosis} and the final inequality follows from calculus and the assumption that $\kappaz \geq 7/2$.
Let $\nu' = \cL(Y)$ be the law of $Y$. Then
\eq{
\KL(\nu, \nu') \leq \log\left(\frac{1}{1-p}\right)\,.
}
Moving onto the second idea, where I use $C$ for a universal positive constant that changes from equation to equation.
Let $A = \set{x : |x| \leq \sqrt{a\kappa}}$ and $\bar A = \R - A$. Define alternative measure $\nu'(E) = \int_E (1 + g(x)) d\nu(x)$ where 
\eq{
g(x) = (\alpha + \beta x) \ind{x \in A}
}
for some constants $\alpha$ and $\beta$ chosen so that
\eq{
\int_{\R} g(x) d\nu(x) &= \alpha \int_A d\nu(x) + \beta \int_A x d\nu(x) = 0\,. \\
\int_{\R} g(x) x d\nu(x) &= \alpha \int_A x d\nu(x) + \beta \int_A x^2 d\nu(x) = \Delta\,.
}
Solving for $\alpha$ and $\beta$ shows that
\eq{
\beta = \frac{\Delta}{\int_{A} x^2 d\nu(x) - \frac{\left(\int_{A} x d\nu(x)\right)^2}{\nu(A)}} \quad \text{ and } \quad 
\alpha = -\frac{\Delta \int_A xd\nu(x)}{\nu(A) \int_{A} x^2 d\nu(x) - \left(\int_{A} x d\nu(x)\right)^2}\,. 
}
We still need to show that $\nu'$ is a probability measure, which will follow from the positivity of $1 - g(\cdot)$.
The first step is to control each of the terms appearing in the definitions of $\alpha$ and $\beta$.
By Cauchy-Schwarz and Chebyshev's inequalities,
\eq{
\nu(\bar A) = \nu(x^2 \geq a \kappa) \leq \frac{1}{\kappa a^2}
}
and
\eq{
\int_A x^2 d\nu(x)
= 1 - \int_{\bar A} x^2 d\nu(x) 
\geq 1 - \sqrt{\kappa \nu(\bar A)}
\geq 1 - \frac{1}{a}\,.
}
Similarly,
\eq{
\left|\int_A x d\nu(x)\right|
= \left|\int_{\bar A} x d\nu(x)\right|
\leq \sqrt{\sigma^2 \nu(\bar A)} 
\leq \frac{1}{a \sqrt{\kappa}}\,.
}
Therefore by choosing $a = 2$ we have\,,
\eq{
\left|\alpha\right| 
&= \Delta\left|\frac{\int_A x d\nu(x)}{\nu(A) \int_A x^2 d\nu(x) - \left(\int_A x d\nu(x)\right)^2}\right|
\leq \frac{\Delta/\sqrt{\kappa}}{a\left(\left(1 - \frac{1}{\kappa a^2}\right) \left(1 - \frac{1}{a}\right) - \frac{1}{a^2 \kappa}\right)} \leq \frac{4\Delta}{\sqrt{\kappa}} \\
\left|\beta\right|
&= \frac{\Delta}{\int_A x^2 d\nu(x) - \frac{\left(\int_A x d\nu(x)\right)^2}{\nu(A)}}
\leq \frac{\Delta}{1 - \frac{1}{a} - \frac{1}{\kappa a^2\left(1 - \frac{1}{a^2\kappa}\right)}}
\leq 6\Delta\,.
}
Now $g(x)$ is an increasing linear function supported on $A$, so 
\eq{
\max_{x \in \R} |g(x)| 
&= \max\set{|g(\sqrt{a\kappa})|, |g(-\sqrt{a\kappa})|} 
\leq |\alpha| + \sqrt{a\kappa} |\beta|
\leq \frac{4\Delta}{\sqrt{\kappa}} + 6\Delta \sqrt{2\kappa}
\leq \frac{1}{2}\,,
}
where the last inequality by assuming that 
\eq{
\Delta \leq \frac{\sqrt{\kappa}}{4(2 + 3\sqrt{2}\kappa)} = O(\kappa^{-1/2})\,,
}
which is reasonable without loss of generality, since if $\Delta$ is larger than this quantity, then we would prefer the bound that depends on $\kappaz$ derived
in the first part of the proof. The relative entropy between $\nu$ and $\nu'$ is bounded by 
\eq{
&\KL(\nu, \nu') 
\leq\chi^2(\nu, \nu') 
= \int_\R \left(\frac{d\nu(x)}{d\nu'(x)} - 1\right)^2 d\nu'(x) 
= \int_A \frac{g(x)^2}{1 + g(x)} d\nu(x) \\
&\quad\leq 2 \int_A g(x)^2 d\nu(x) 
\leq 4 \int_A \alpha^2 d\nu(x) + 4\int_A \beta^2 x^2 d\nu(x) 
\leq 4 \alpha^2 + 4\beta^2 \\
&\leq \frac{4 \cdot 16 \Delta^2}{\kappa} + 4 \cdot 36 \Delta^2 \leq C \Delta^2\,.
}
In order to bound the kurtosis we need to evaluate the moments:
\eq{
\int_\R x^2 d\nu'
&= \int_\R x^2 d\nu + \int_A g(x) x^2 d\nu
= 1 + \alpha \int_A x^2 d\nu(x) + \beta \int_A x^3 d\nu(x) \\
&\leq 1 + C \Delta \sqrt{\kappa}\,. \\
\int_\R x^2 d\nu'
&= \int_\R x^2 d\nu + \int_A g(x) x^2 d\nu
\geq 1 - C \Delta \sqrt{\kappa}\,. \\
\int_\R x^4 d\nu'
&= \int_\R x^4 d\nu + \int_A g(x) x^4 d\nu
= \kappa + \alpha \int_A x^4 d\nu(x) + \beta \int_A x^5 d\nu(x) \\
&\leq \kappa\left(1 + C\Delta \sqrt{\kappa}\right)\,. \\ 
\left|\int_\R x^3 d\nu'(x)\right|
&\leq \sqrt{\int_\R x^2 d\nu'(x) \int_\R x^4 d\nu'(x)} 
\leq \sqrt{C\kappa}\,. 
}
Therefore if $\kappa'$ is the kurtosis of $\nu'$, then
\eq{
\kappa' = \frac{\int_\R (x - \Delta)^4 d\nu'(x)}{\left(\int_\R x^2 d\nu'(x) - \Delta^2 \right)^2}
= \frac{\int_\R x^4 d\nu'(x) - 3\Delta^4 + 6\Delta^2 \int_\R x^2 d\nu'(x) - 4\Delta \int_\R x^3 d\nu'(x)}{\left(1 - \Delta^2 + \alpha \int_A x^2 d\nu(x) + \beta \int_A x^3 d\nu(x)\right)^2}
}
As a brief aside, if $\nu$ is symmetric, then the odd moments vanish and $\int_A x^i d\nu(x) = 0$ for odd $i$. Therefore $\alpha = 0$ 
\eq{
\kappa' = \frac{\kappa - 3\Delta^4 + 6\Delta^2}{(1 - \Delta^2)^2}
\leq \frac{\kappa + 6\Delta^2}{1 - 2\Delta^2}
= \kappa + \frac{6\Delta^2}{1 - 2\Delta^2} + \frac{2\kappa \Delta^2}{1 - 2\Delta^2}
\leq \kappa + C\kappa \Delta^2\,.
}
On the other hand, if $\nu$ is not symmetric, then the odd moments must be controlled.
\eq{
\kappa' 
&= \frac{\int_\R x^4 d\nu'(x) - 3\Delta^4 + 6\Delta^2 \int_\R x^2 d\nu'(x) - 4\Delta \int_\R x^3 d\nu'(x)}{\left(\int_\R x^2 d\nu'(x) - \Delta^2\right)^2}\\
&\leq \frac{\kappa\left(1 + C\Delta \kappa^{1/2}\right) + 6\Delta^2(1 + C \Delta \kappa^{1/2}) + C\Delta \kappa^{1/2}}{\left(1 - C\Delta \kappa^{1/2} - \Delta^2\right)^2}\\
&\leq \frac{\kappa + C\Delta \kappa^{1/2}(\kappa+1)}{1 - C\Delta \kappa^{1/2}}
\leq \kappa + C\Delta \kappa^{1/2}(\kappa+1)\,. 
}
By patching the two results we obtain that for all $\Delta > 0$ and $\nu \in \cH_{\kappaz}$ with mean $\mu$, variance $\sigma^2 > 0$ and kurtosis $\kappa$,
\eq{
&\inf\set{
\KL(\nu, \nu') : \nu' \in \cH_{\kappa} \text{ and } \E_{X \sim \nu'}[X] > \mu + \Delta} \\
&\qquad \qquad \leq 
  \begin{cases}
    \min\set{\log\left(\frac{1}{1 - p}\right),\, \frac{C' \Delta^2}{\sigma^2}} & \text{if }\, C \kappa^{1/2}(\kappa+1)\frac{\Delta}{\sigma} < \kappaz \\[0.4cm]
    \log\left(\frac{1}{1 - p}\right) & \text{otherwise}\,,
  \end{cases}
}
where $C, C' > 0$ are universal constants and $p = \min\set{\Delta/\sigma, 1/\kappaz}$.

\end{document}